\documentclass[letterpaper, 10 pt, conference]{ieeeconf}

\usepackage{amsmath}
\usepackage{amssymb}
\usepackage{mathtools}

\usepackage{tabularx}
\usepackage{graphicx}
\usepackage{bm}
\usepackage{color}
\usepackage{float}
\usepackage{units}
\usepackage{url}
\usepackage{hyperref}
\usepackage{balance}
\usepackage{amsmath}
\usepackage{makecell}
\usepackage{cite}
\usepackage{contour}
\usepackage{xcolor}
\usepackage{amsthm}
\usepackage{algorithm, algorithmicx, algpseudocode}
\usepackage{booktabs, multirow, multicol}
\usepackage[table]{xcolor}
\usepackage[dvipsnames]{xcolor}

\definecolor{lightgrayrow}{RGB}{240,240,240}
\definecolor{lightpinkrow}{RGB}{255,230,235}

\newcommand{\outline}[1]{%
  \contourlength{1pt}
  \contour{black!75}{\textbf{\textcolor{white}{#1}}}%
}

\newcommand{\outlineText}[1]{%
  \contourlength{0.7pt}
  \contour{black!75}{\textbf{\textcolor{white}{#1}}}%
}

\makeatletter 
\def\endfigure{\end@float} 
\def\endtable{\end@float}
\makeatother

\usepackage{subcaption}
\usepackage[]{graphicx}
\usepackage{caption}

\newcommand{\greytext}[1]{\textcolor{gray}{#1}}

\newcommand{\greentext}[1]{\textcolor{green}{#1}}
\newcommand{\todo}[1]{\textcolor{red}{#1}}

\newtheorem{remark}{\textbf{Remark}}
\newtheorem{theorem}{Theorem}

\newtheorem{lemma}{Lemma}

\newtheorem{assumption}{Assumption}
\newtheorem{proposition}{Proposition}

\IEEEoverridecommandlockouts

\begin{document} 

\title{\Large \bf 			
\outline{MIRROR}: Visual \outlineText{M}otion \outlineText{I}mitation via \outlineText{R}eal-time \outlineText{R}etargeting and Tele\outlineText{O}pe\outlineText{R}ation with Parallel Differential Inverse Kinematics
}


\author{Junheng Li, Lizhi Yang, and Aaron D. Ames\thanks{Authors are with the Department of Mechanical and Civil Engineering, California Institute of Technology, Pasadena, USA.}\thanks{Email:{\tt\small $
\{$junhengl,lzyang,ames$\}$@caltech.edu}}\thanks{This work is in part supported by Technology Innovation Institute, The Dow Chemical Company project \#227027AW, and Westwood Robotics.}
\\ \vspace{0.2cm} 
\textit{\scriptsize Project Website: \url{https://caltech-amber.github.io/mirror/}}
\vspace{-1.2cm}												
}	
\maketitle


\begin{abstract}
Real-time humanoid teleoperation requires inverse kinematics (IK) solvers that are both responsive and constraint-safe under kinematic redundancy and self-collision constraints. While differential IK enables efficient online retargeting, its locally linearized updates are inherently basin-dependent and often become trapped near joint limits, singularities, or active collision boundaries, leading to unsafe or stagnant behavior. We propose a GPU-parallelized, continuation-based differential IK that improves escape from such constraint-induced local minima while preserving real-time performance, promoting safety and stability. Multiple constrained IK quadratic programs are evaluated in parallel, together with a self-collision avoidance control barrier function (CBF), and a Lyapunov-based progression criterion selects updates that reduce the final global task-space error. The method is paired with a visual skeletal pose estimation pipeline that enables robust, real-time upper-body teleoperation on the THEMIS humanoid robot hardware in real-world tasks. 

\end{abstract}


\section{Introduction}
\label{sec:Introduction}

Humanoid robots are increasingly expected to operate in human-centered environments, where intuitive teleoperation, demonstration-based control, and human motion imitation are essential for natural and human-like interaction \cite{gu2025humanoid}. 
%
The predominant teleoperation pipelines are typically wearable-equipment-based, such as VR headsets \cite{bertrand2024high, he2024omnih2o, li2025clone}, motion-capture suits \cite{ze2025twist}, and haptic feedback devices \cite{colin2023whole, he2025novel, dafarra2024icub3}, allowing rich feedback and tailored interfaces. However, these systems require users to wear bulky hardware, undergo repeated calibration procedures, and are often limited to prepared lab environments.  This is a bottleneck to the widespread deployment of teleoperated humanoids, and the data-generation pipeline they would enable. 


Recent advances in optical sensors have made it possible to extract reliable human pose data in real-time. He et al. \cite{he2024learning} leverages a monocular camera for pose estimation and demonstrates the feasibility of real-time humanoid teleoperation with only RGB cameras. Lately, stereo cameras have made 3-D pose estimation more reliable at high frequency \cite{zago20203d}, creating new opportunities for perceptive humanoid retargeting and teleoperation with better pose depth estimation. However, deploying such a pipeline in real-world teleoperation remains challenging. The optical setup must be portable, low-latency, and robust to (partial) occlusion \cite{darvish2023teleoperation}.
The whole-body kinematic retargeting pipeline must be computationally lightweight, collision-aware, globally optimal, and capable of operating at control frequencies suitable for real-time whole-body humanoid control. 

In this paper, we present MIRROR, a hierarchical, self-contained, low-latency humanoid retargeting and teleoperation pipeline using stereo cameras and a novel GPU-accelerated differential inverse kinematics (IK) framework.

\begin{figure}[!t]
\vspace{0cm}
    \center	
    \includegraphics[clip, trim=0.0cm 0.0cm 0.0cm 0.0cm, width=1\columnwidth]{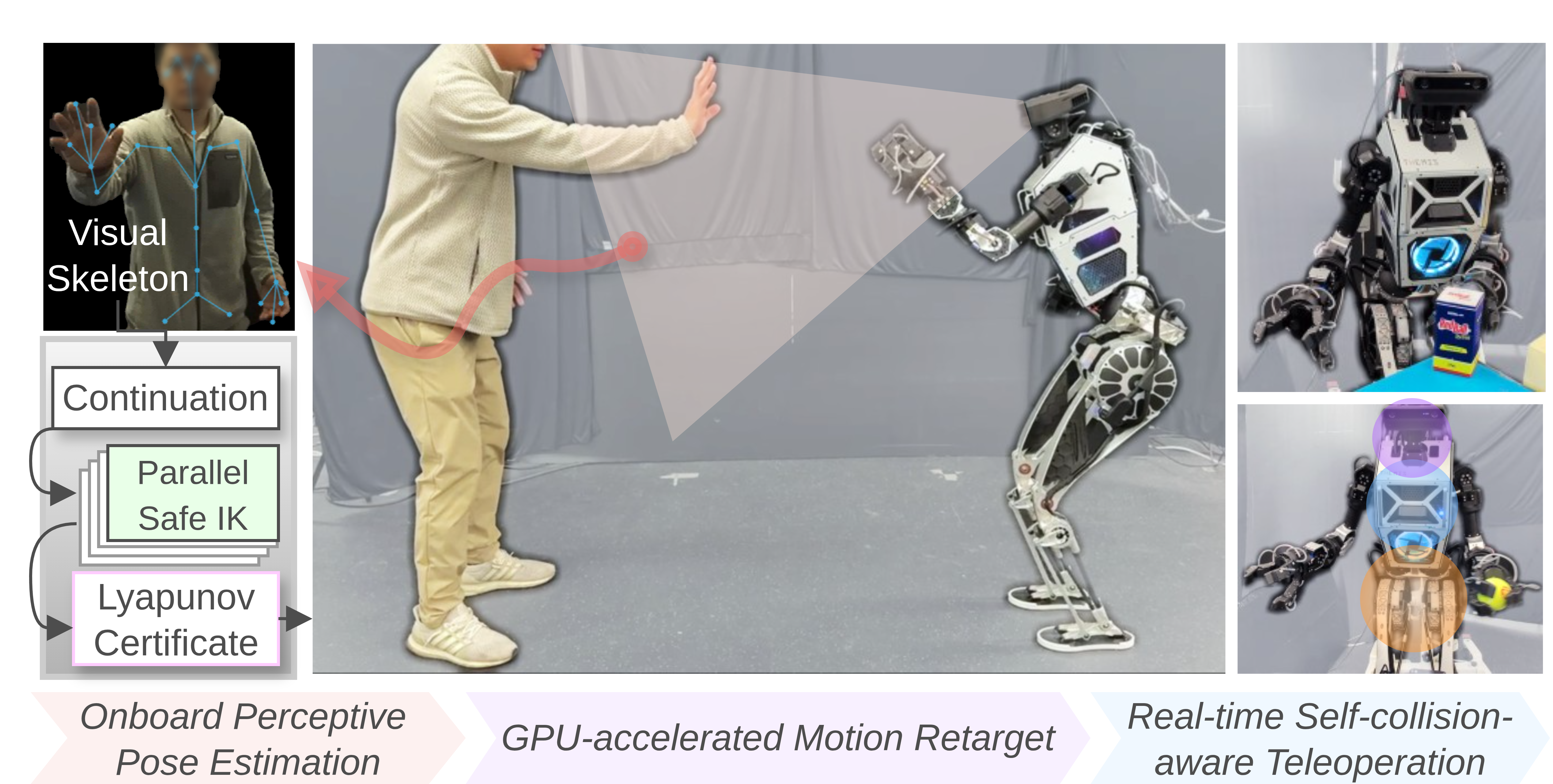}
    \caption{{\bfseries Real-time human-humanoid motion mirroring teleoperation.} 
    Full experiment videos \url{https://youtu.be/LvwIwOTTu9g}}
    \label{fig:hero}
    \vspace{-0.5cm}
\end{figure}

\begin{figure*}[!t]
    \center	
    \includegraphics[clip, trim=0.2cm 0.5cm 0.2cm 0cm, width=2\columnwidth]{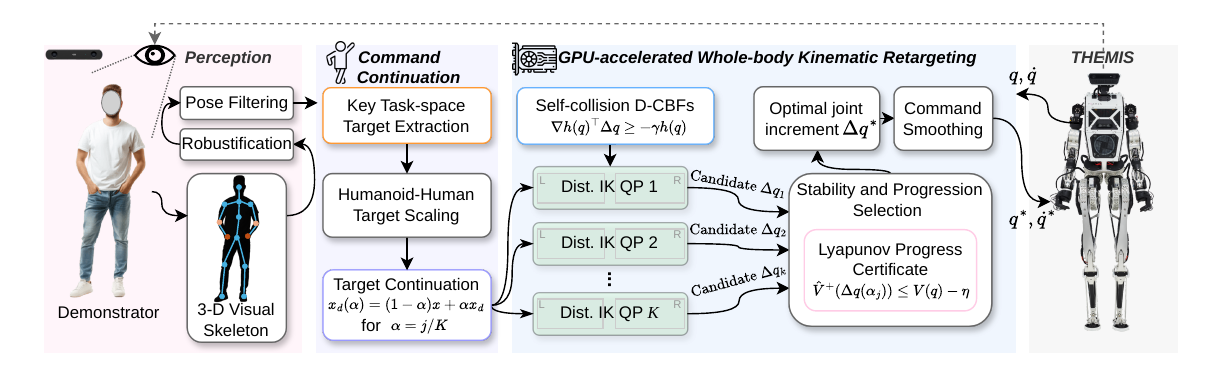}
    \caption{{\bfseries MIRROR pipeline architecture.} }
    \label{fig:system}
    \vspace{-0.2cm}
\end{figure*}

Inverse kinematics (IK) underpins motion control and whole-body retargeting by mapping lower-dimensional task-space commands to higher-dimensional joint configurations. Classical global IK formulates this mapping as a nonlinear optimization over joint space, achieving accurate solutions but at substantial computational cost, making it more suitable for offline retargeting pipelines \cite{darvish2023teleoperation, yang2025omniretarget, araujo2025retargeting}. In contrast, differential IK solves a constrained quadratic program (QP) over joint velocity increments, enabling high-frequency control and widespread adoption in humanoid whole-body frameworks \cite{del2016implementing, pink}. However, despite its efficiency, differential IK is inherently local and basin-dependent, often becoming trapped in local minima when task constraints are underdetermined or when commands deviate significantly from nominal configurations.
Kobayashi and Jin \cite{kobayashi2023mirror} attempt to allow Jacobian-based IK to escape local minima via a heuristics-based $\epsilon$-clamping. Giamou et al. \cite{giamou2022convex} leverages iterations of lightweight low-rank convex optimization. Nonetheless, a good middle ground between global convergence and real-time feasibility is highly sought after.

Recently, parallelization has been explored to accelerate optimization and planning. GPU-based optimization \cite{bishop2024relu, jeon2024cusadi}, parallel sampling in model predictive control \cite{xue2025full,alvarez2025real}, and parallelizable physics simulations \cite{todorov2012mujoco} are examples where parallel computation improves robustness or speed in robotic control. In kinematic retargeting, however, popular toolkits still rely on single-instance differential IK solvers \cite{pink, Zakka_Mink_Python_inverse_2026}. CuRoBo \cite{sundaralingam2023curobo} and HJCD-IK\cite{yasutake2025hjcd} attempt to solve global IK via GPU parallelization for accelerated motion generation. IKFlow \cite{ames2022ikflow} leverages deep neural networks to produce a variety of IK solutions in real-time. 
While effective on lower-dimensional manipulators, extending these methods to high-DoF systems is challenging due to the increased complexity of the optimization landscape, where strong joint coupling and constraint interactions create narrow feasible manifolds and poorly conditioned local basins. Rather than merely improving computational efficiency, distributed optimization decomposes the problem into smaller structured subspaces, reducing effective exploration complexity and limiting cross-joint interference. Such decompositions have shown improved robustness and solution quality under fixed time budgets in legged systems \cite{ma2020coupled, amatucci2024accelerating}.

This work addresses the fundamental tension between global optimality and real-time feasibility in motion retargeting with modern GPU parallelism. While global IK can escape local minima, it is computationally too expensive for high-frequency control. Differential IK is fast but can become trapped in poor local solutions. We propose a GPU-accelerated Distributed Differential IK method that runs many parallel instances with task command continuation and has a higher probability of selecting the local-basin-escaping solution in real time with increased parallel samples.  As a result, the proposed method significantly improves robustness to local minima while maintaining real-time feasibility.
The main contributions include:
\begin{itemize}
    \item We propose a self-contained, real-time humanoid retargeting and teleoperation pipeline that uses stereo cameras and 3-D visual skeletal pose estimations. 
    \item We introduce an optimization-based, GPU-accelerated parallel IK scheme. We formalize and validate that the proposed parallel differential IK with the task-space continuation and Lyapunov-based certification encourages the escape from the local basin of attraction and pose stagnation, which has been an open challenge in traditional differential IK. 
    \item Our approach employs a dual-layer safety mechanism for self-collision avoidance: control barrier function (CBF) constraints are enforced within the IK optimization, while parallel evaluation further improves closed-loop safety by increasing the likelihood of selecting collision-free solutions.
    \item We validate the proposed pipeline on hardware---specifically the THEMIS humanoid robot---with upper-body motion tracking from a human operator. The proposed low-latency pipeline enables stable motion tracking teleportation under realistic humanoid tasks. 
\end{itemize}


This paper, therefore, demonstrates end-to-end teleoperation on humanoid robots through the use of parallel distributed differential IK---which includes the Lyapunov-like certificates for safety and stability, setting the stage for (principled) integration into broader autonomy stacks.

\section{Pipeline and Approach}
\label{sec:approach}
We present a real-time motion retargeting  and teleoperation pipeline that transforms
demonstrator observations into safe, task-consistent joint commands for humanoid robots. The proposed architecture, illustrated in Fig.~\ref{fig:system}, decomposes the pipeline into three modular stages, enabling high-frequency motion generation while maintaining explicit safety constraints and improving robustness to local basin trapping in constrained differential IK.

\subsection{Perception and Pose Processing}
\paragraph{Visual Skeleton} 
The perception module estimates the demonstrator’s full-body skeletal structure from stereo perception.
In particular, the front end runs the Stereolabs ZED 2i stereo camera and its Body Tracking module to extract \texttt{Body38} 3-D visual skeleton representation, using it to infer
$n$ key task-space targets $\mathcal{P}=\{ p_t\in \mathbb{R}^3,\: t = 0,1,...,n\}$ relevant to humanoid motion retargeting. These include, but are not limited to, torso position, hand positions, elbow positions, and fingertip positions. The tracked keypoints are converted into a local body frame by subtracting the estimated body reference point for a consistent coordinate system in downstream task extraction.

\paragraph{Pose Filtering and Robustification}
Because vision-based keypoints can be intermittent and exhibit spurious outliers, we apply a lightweight filtering stage inside the perception module before publishing targets to the shared state. Specifically, each
3-D keypoint is processed by a low-pass filter with three robustness features:
(i) \textbf{confidence-based gating}, (ii) \textbf{jump rejection}, and (iii) \textbf{multi-body rejection}. Confidence-based gating
ensures both numerical integrity (\textit{e.g.}, NaNs) and tracker confidence. Jump rejection detects large frame-to-frame deviations using an $\ell_2$ threshold to suppress transient spikes. In the multi-body rejection module, candidate bodies are evaluated based on spatial proximity to the previously tracked target. The closest valid detection is selected as the current demonstrator, while other detections are rejected. This prevents identity switching in multi-person environments.

Concretely, for a measured keypoint $p_t \in \mathbb{R}^3$ with a filtered state
$\bar p_{t-1}$, we compute
\begin{equation}
\bar p_t =
\begin{cases}
\bar p_{t-1}, & \text{if invalid },\\[2pt]
\lambda_\textrm{jr} p_t + (1-\lambda_\textrm{jr})\bar p_{t-1}, & \text{if } \|p_t-\bar p_{t-1}\|>\tau_\textrm{jr},\\[2pt]
\alpha_\textrm{jr} p_t + (1-\alpha_\textrm{jr})\bar p_{t-1}, & \text{otherwise},
\end{cases}
\end{equation}
where $\tau_\textrm{jr}$ is the jump threshold and $\lambda_\textrm{jr} \ll \alpha_\textrm{jr}$ is a conservative
mixing coefficient used only during jump rejection.

\subsection{Task-space Command Continuation}
\paragraph{Command Scaling} Directly solving inverse kinematics to track $\mathcal{P}$ can lead to infeasible or unstable joint updates due to kinematic differences between the robot and the operator. Each key body-frame task-space command's position part is then scaled by $\beta_t$:
\begin{align}
    x_d^\textrm{pos} = \beta_t (p_t-p_c) + x_c,
\end{align}
where $p_c$ denotes the skeleton body anchor position and $x_c$ denotes the robot torso CoM feedback.

\paragraph{Command Continuation} 
In practice, large task-space displacements may require joint motions that violate safety constraints or reach singularities, resulting in convergence to undesirable local solutions, which has been a challenge for traditional differential IK approaches.

To mitigate this issue, we introduce a task-space
continuation mechanism that gradually interpolates the desired task command from the current robot state via parallel solvers at each time step.
For a humanoid whole-body configuration space
$q \in \mathbb{R}^{n}$,
let $x:=x(q)$ denote the current task value and $x(\cdot)$ denote the forward kinematics map. For $\alpha\in[0,1]$, define the continuation target
\begin{equation}
x_d(\alpha) := (1-\alpha)\,x + \alpha\,x_d.
\end{equation}
Then the intermediate displacement from the current task is
\begin{equation}
d_x(\alpha) := x_d(\alpha) - x(q) = \alpha\,e(q).
\end{equation}
We select a set of $K+1$ continuation parameters $\{\alpha_j\}_{j=0}^K\subset[0,1]$.
In our implementation, we use a deterministic grid $\alpha_j=j/K$; alternatively, one may sample $\alpha_j$
randomly for probabilistic coverage guaranties (Sec.~\ref{subsec:escape_prob}).

At each control step, we evaluate $K$ candidate joint increments corresponding to different continuation parameters $\alpha$, thereby reducing the likelihood of constraint-dominated stagnation that arises when tracking $x_d$ directly. A detailed probabilistic basin-escape proof is presented in Sec. \ref{sec:pdik}.

\subsection{GPU-accelerated Whole-body Kinematic Retargeting}

\paragraph{Self-collision CBFs}
To enforce self-collision avoidance between upper limbs and torso/head (e.g., in \cite{khazoom2022humanoid}),
we model the body as a set of bounding spheres with centers
$c_o(q) \in \mathbb{R}^3$ and radii $R_o$.
For a tracked point $x_i^\textrm{pos}(q) \in \mathbb{R}^3$
(\textit{e.g.}, wrist or elbow),
define the safety function
\begin{equation}
\label{eq:cbf_h}
h_{i}(q) = \| x_i^\textrm{pos}(q) - c_o(q) \|^2 - \rho_{i}^2,
\end{equation}
where $\rho_{i} = R_o + r_i + m$
includes the sphere radius, limb thickness, and safety margin $m$.
Safety requires $h_{i}(q) \ge 0$.

Using first-order linearization,
\[
h_{i}(q + \Delta q) \approx h_{i}(q)+ \nabla h_{i}(q)^\top \Delta q,
\]
where
\[
\nabla h_{i}(q)^\top = 2 (x^\textrm{pos}_i - c_o)^\top
\left( J_{x_i}(q) - J_{c_o}(q) \right).
\]

A discrete-time Control Barrier Function (D-CBF) condition is imposed \cite{ames2016control} :
\begin{equation}
\label{eq:cbf_constraint}
\nabla h_{i}(q)^\top \Delta q
\ge
-\gamma\, h_{i}(q),
\qquad \gamma > 0,
\end{equation}
which ensures forward invariance of safe configurations under bounded increments. In this work, we embed 4 key locations (hands and elbows) w.r.t. 3 CBF spheres forming the robot's upper body (shown in Fig.~\ref{fig:hero}). 

\paragraph{Family of Parallel QPs}
For each $\alpha_j$, we solve in parallel the distributed convex QP for each arm chain
\begin{align}
\Delta q(\alpha_j) \in \arg\min_{\Delta q}\quad
& \frac{1}{2}\left\|J(q)\Delta q - \alpha_j e(q)\right\|_{W_x}^2
+ \frac{1}{2}\|\Delta q\|_{W_q}^2
\label{eq:pc_qp_obj}\\
\text{s.t.}\quad
& \nabla h_{i}(q)^\top \Delta q \ge -\gamma h_{i}(q), \label{eq:cons1}
\\
& \Delta q_{\min}\le \Delta q \le \Delta q_{\max},  \label{eq:cons2}
\\
& q_\textrm{min} \leq q + \Delta q \leq q_\textrm{max}, \label{eq:cons3}
\end{align}
where $ e(q) := x_d - x(q) $ is the final-goal error, $J(q)$ is the corresponding task Jacobian, and $W_q\succ 0$, $W_x\succ 0$ are diagonal weighting matrices.
The objective is strongly convex and each feasible QP admits a unique minimizer.


\paragraph{Lyapunov Progress Certificate and Selection}
Given a desired target $x_d$ and error $e(q)$, define a Lyapunov-like objective based on the global IK tracking goal
\begin{equation}
\label{eq:lyapunov_pc}
V(q) = \frac{1}{2} e(q)^\top W\, e(q).
\end{equation}

Using first-order kinematics,
\begin{equation}
\label{eq:firstKin}
x(q+\Delta q) \approx x(q) + J(q)\Delta q,
\end{equation}
the predicted next final-goal error is
\begin{equation}
\hat e^{+}(\Delta q) = x_d - \bigl(x(q)+J(q)\Delta q\bigr) = e(q) - J(q)\Delta q,
\end{equation}
and the predicted next Lyapunov value is
\begin{equation}
\hat V^{+}(\Delta q) = \frac{1}{2}\hat e^{+}(\Delta q)^\top W\, \hat e^{+}(\Delta q).
\end{equation}
We accept a candidate $\alpha_j$ if it is feasible and yields sufficient predicted progress:
\begin{equation}
\hat V^{+}\bigl(\Delta q(\alpha_j)\bigr) \le V(q) - \eta,
\label{eq:stable_accept}
\end{equation}
for some $\eta>0$.
We additionally require an escaping ``reverse'' update away from QP convergence (\textit{ e.g.},
$\|\Delta q(\alpha_j)\|\ge \varepsilon_q$ when $V(q)>\varepsilon_V$). This encourages staying away from intermediate task goals to reduce the chances of getting stuck at a local basin of attraction.
Among accepted candidates, we choose the \emph{largest} continuation parameter:

\begin{align}
j^\star &= \max\Bigl\{j:\ \Delta q(\alpha_j)\ \text{satisfies  (\ref{eq:cons1}-\ref{eq:cons3}, \ref{eq:stable_accept})}\Bigr\}, \\
 q^* &= q^+=q +\Delta q(\alpha_{j^\star}), \\
 \dot q^* &= \Delta q(\alpha_{j^\star})/dt.
\label{eq:max_alpha_select}
\end{align}

\begin{remark}
\textup{
This selection yields the most aggressive (closest-to-target) continuation update that still satisfies
a final-goal progress certificate, which filters candidate QP solutions based on whether they reduce the real nonlinear IK objective to avoid a local basin while still ``staying close" to intermediate targets (\textit{i.e.}, local differential IK objective).
This selection does not improve the intermediate tracking error significantly but strongly certifies progress towards the final goal and away from local minima, seen in Table \ref{tab:multistep_comparison}.}
\end{remark}

\subsection{Robot Hardware}
The final stage of the pipeline sends the selected joint-space commands to execute on the THEMIS humanoid platform from Westwood Robotics. 
These joint position and velocity commands are transmitted to the robot’s low-level Whole-body Control and joint-space PD control.
This decouples perception-driven retargeting from hardware execution while preserving real-time safety and responsiveness. 
By maintaining a hierarchical structure, the framework ensures that kinematic retargeting remains agnostic to the robot's low-level control, promoting modularity and adaptability.


\section{Parallel Differential Inverse Kinematics}
\label{sec:pdik}

In this section, we formalize the basin-dependent behavior of standard differential inverse IK and introduce a parallel task continuation scheme. Specifically, we show that the iterates of differential IK remain confined to local basins of attraction and that evaluating multiple continuation targets in parallel increases the coverage of feasible descent directions within a fixed computation budget. Therefore, the probability of escaping a local basin of attraction (\textit{e.g.}, stagnation and CBF violations) will increase accordingly.

\subsection{Local Basin Dependence of Differential IK}

\begin{lemma}[\textit{Differential IK as a Gauss--Newton step}]
\label{lem:dik_gn}
\textup{Consider the regularized nonlinear IK objective
\begin{equation}
\label{eq:nonlinear_ik_obj}
F(q) = \frac{1}{2}\|x(q)-x^d\|_{W_x}^2 + \frac{1}{2}\|q-q_{\mathrm{ref}}\|_{W_q}^2,
\end{equation}
At an iterate $q_k$, we can linearize by eqn. \eqref{eq:firstKin}. Then it yields the convex QP \eqref{eq:pc_qp_obj}, which computes a Gauss--Newton step for minimizing $F(q)$ at $q_k$ and has a global minimizer $\Delta q_k$ for the local model.
}
\end{lemma}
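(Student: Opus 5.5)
The plan is to substitute the first-order model \eqref{eq:firstKin} into $F$ and compare the result term by term with the QP objective \eqref{eq:pc_qp_obj}. Write $q = q_k + \Delta q$ and set $e_k := x^d - x(q_k)$. Then $x(q_k+\Delta q) - x^d \approx J(q_k)\Delta q - e_k$, which gives the local model
\begin{equation*}
m_k(\Delta q) = \tfrac{1}{2}\|J(q_k)\Delta q - e_k\|_{W_x}^2 + \tfrac{1}{2}\|q_k + \Delta q - q_{\mathrm{ref}}\|_{W_q}^2 .
\end{equation*}
Taking $q_{\mathrm{ref}} = q_k$, i.e.\ the regularizer penalizes motion from the current iterate as in damped least squares, the second term becomes $\tfrac12\|\Delta q\|_{W_q}^2$. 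With $\alpha_j = 1$, $m_k$ then coincides with \eqref{eq:pc_qp_obj}.

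I would handle general $\alpha_j$ by replacing $x^d$ with the continuation target $x_d(\alpha_j)$. By the identity $d_x(\alpha)=\alpha e(q)$, the residual becomes $J\Delta q - \alpha_j e_k$, so each member of the parallel QP family is the model of $F$ for an intermediate target.

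Next I would show that minimizing $m_k$ is a (regularized, Levenberg--Marquardt-type) Gauss--Newton step. The gradient and Hessian of $m_k$ at $\Delta q = 0$ are $-J^\top W_x e_k$ and $J^\top W_x J + W_q$. The true gradient of $F$ at $q_k$ is $\nabla F(q_k) = J^\top W_x (x(q_k)-x^d) + W_q(q_k - q_{\mathrm{ref}})$, so the gradients agree. The Hessian $\nabla^2 F$ equals $J^\top W_x J + W_q$ plus the term $\sum_i [W_x(x-x^d)]_i \nabla^2 x_i(q_k)$. Dropping that second-order residual term is exactly the Gauss--Newton approximation. In the unconstrained case the stationarity condition is $(J^\top W_x J + W_q)\Delta q = J^\top W_x e_k$, which is the Gauss--Newton normal equation (damped by $W_q$).

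Then I would prove that the global minimizer exists and is unique. Since $W_q \succ 0$ and $J^\top W_x J \succeq 0$, the Hessian $J^\top W_x J + W_q \succ 0$, so $m_k$ is strongly convex. The constraints \eqref{eq:cons1}--\eqref{eq:cons3} are linear inequalities in $\Delta q$, so the feasible set is a closed convex polyhedron. It is also bounded because of \eqref{eq:cons2}. If it is nonempty, a strongly convex continuous function attains a unique minimum on it (Weierstrass plus strict convexity), and that minimizer is global for the local model. In the constrained case the Gauss--Newton step should be read as the projected or constrained Gauss--Newton step, characterized by the KKT conditions.

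The main obstacle is interpretive rather than technical. The lemma mixes a regularizer centered at $q_{\mathrm{ref}}$ with a QP that penalizes $\Delta q$, and it is loose about whether the step is exactly or approximately Gauss--Newton. I would resolve this by stating explicitly that $q_{\mathrm{ref}} = q_k$ (a proximal or damped reading). Equivalently, for fixed $q_{\mathrm{ref}}$ the QP equals the Gauss--Newton model up to a linear term $W_q(q_k-q_{\mathrm{ref}})^\top\Delta q$ and a constant, which does not affect convexity or uniqueness. I would also assume feasibility of the constraint set, which the CBF condition with $h_i(q_k)\ge 0$ guarantees via $\Delta q = 0$ whenever $q_k$ respects the joint limits.
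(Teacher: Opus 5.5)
The paper gives no proof of this lemma. It is asserted, and the text moves straight on to Proposition~\ref{thm:local_min_trap}, so there is no authors' route to compare against. Your argument is the natural justification and is correct. It substitutes \eqref{eq:firstKin} into $F$ and matches the gradient. It identifies $J^\top W_x J+W_q$ as the Gauss--Newton Hessian, obtained by dropping $\sum_i [W_x(x-x^d)]_i\nabla^2 x_i$. It then gets a unique global minimizer from strong convexity on a nonempty polyhedron. It is also more careful than the statement itself: you notice that the $\alpha_j$-dependence forces the QP to be read as a Gauss--Newton step toward a frozen intermediate target, and the constraints force it to be read as a constrained step. The constrained reading can be made precise. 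With $H:=J^\top W_x J+W_q$, the QP minimizer is exactly the $H$-metric projection of the unconstrained Gauss--Newton step onto the feasible polyhedron.

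One claim needs correcting. You offer as a fallback that, for fixed $q_{\mathrm{ref}}$, the QP ``equals the Gauss--Newton model up to a linear term and a constant.'' That is not an equivalent resolution. Since $\tfrac12\|q_k+\Delta q-q_{\mathrm{ref}}\|_{W_q}^2=\tfrac12\|\Delta q\|_{W_q}^2+(q_k-q_{\mathrm{ref}})^\top W_q\Delta q+\mathrm{const}$, the two minimizers differ. With $\alpha_j=1$, the unconstrained minimizer of the QP is $H^{-1}J^\top W_x e_k$. The minimizer of the Gauss--Newton model of $F$ is $H^{-1}\bigl(J^\top W_x e_k-W_q(q_k-q_{\mathrm{ref}})\bigr)$. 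These differ whenever $q_k\neq q_{\mathrm{ref}}$: the linear term preserves convexity and uniqueness but changes the step. So $q_{\mathrm{ref}}=q_k$ is a hypothesis the lemma genuinely needs for the Gauss--Newton claim, not one of two interchangeable readings, and you should state it that way. Equivalently, read \eqref{eq:pc_qp_obj} as a Levenberg--Marquardt step on the tracking term alone with damping $W_q$. Also, feasibility of $\Delta q=0$ requires $\Delta q_{\min}\le 0\le\Delta q_{\max}$ in \eqref{eq:cons2}. This is needed in addition to $h_i(q_k)\ge 0$ for all $i$ and $q_{\min}\le q_k\le q_{\max}$.
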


\begin{proposition}[\textit{Basin dependence of differential IK}]
\label{thm:local_min_trap}
Assume $x(\cdot)$ is twice continuously differentiable on an open set $\mathcal{D}$ and $J_i(q)$ is locally Lipschitz on $\mathcal{D}$. Consider the iterative differential IK update
\begin{equation}
\label{eq:dik_update}
q_{k+1} = q_k + t_k \Delta q_k,
\end{equation}
where $\Delta q_k$ is obtained by solving \eqref{eq:pc_qp_obj} and $t_k\in(0,1]$ may be chosen by a standard line-search or trust-region rule such that $F(q_{k+1}) \le F(q_k)$ whenever $q_k\in\mathcal{D}$. Then every accumulation point of $\{q_k\}$ is a stationary point of $F$ in $\mathcal{D}$.

Furthermore, suppose $F$ has at least two isolated strict local minimizers $q^\star_1$ and $q^\star_2$ in $\mathcal{D}$. Then there exist neighborhoods $U_1$ of $q^\star_1$ and $U_2$ of $q^\star_2$ such that if $q_0\in U_j$ for some $j\in\{1,2\}$, the sequence $\{q_k\}$ remains in $U_j$ and converges to a stationary point in $U_j$ (typically $q^\star_j$). In particular, the method cannot transition from the basin of attraction of one local minimizer to another without violating the descent condition (\textit{i.e.}, without increasing $F$ at some iteration).

\end{proposition}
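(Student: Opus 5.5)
We treat the update \eqref{eq:dik_update} as a descent method on $F$ and argue in two parts: global convergence to stationary points first, then basin confinement via sublevel sets.

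\textbf{Step 1: descent direction.} By Lemma~\ref{lem:dik_gn}, with the constraints \eqref{eq:cons1}--\eqref{eq:cons3} inactive and $\alpha=1$, the step is
$$\Delta q_k=-H_k^{-1}\nabla F(q_k),\qquad H_k=J^\top W_xJ+W_q\succeq W_q\succ0 .$$
Hence
$$\nabla F(q_k)^\top\Delta q_k=-\nabla F(q_k)^\top H_k^{-1}\nabla F(q_k)\le -c\|\nabla F(q_k)\|^2 .$$
Here $c>0$ is uniform on any compact subset of $\mathcal D$, because $J$ is continuous there and so $H_k$ is bounded above. The direction is also gradient-related, since $\|\Delta q_k\|\le \lambda_{\min}(W_q)^{-1}\|\nabla F(q_k)\|$.

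\textbf{Step 2: accumulation points are stationary.} Since $x\in C^2$ and $J$ is locally Lipschitz, $\nabla F$ is locally Lipschitz. An Armijo or trust-region rule then gives the sufficient decrease
$$F(q_k)-F(q_{k+1})\ge \sigma t_k c\|\nabla F(q_k)\|^2 ,$$
with $t_k$ bounded away from zero on compact sets; this is the standard backtracking argument using the descent lemma. Now take a subsequence $q_{k_i}\to\bar q\in\mathcal D$. The sequence $F(q_k)$ is monotone and bounded below by $0$, so it converges, and the telescoped decreases are summable. Therefore $\|\nabla F(q_{k_i})\|\to0$, and by continuity $\nabla F(\bar q)=0$.

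\textbf{Step 3: basin confinement.} Because $q_j^\star$ is an isolated strict local minimizer, we can choose $r>0$ with the following properties: the closed ball $\bar B_r(q_j^\star)\subset\mathcal D$, the ball contains no other stationary point, and $F>F(q_j^\star)$ on $\bar B_r\setminus\{q_j^\star\}$. Let
$$m=\min_{\partial B_r}F>F(q_j^\star),$$
and let $U_j$ be the connected component containing $q_j^\star$ of $\{q\in B_r: F(q)<m\}$. We need to show that monotone iterates cannot leave $U_j$.

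Monotonicity alone gives $F(q_{k+1})<m$, but the step is discrete and could in principle jump across $\partial B_r$. We control this with step size. $\|\Delta q_k\|\le C\|\nabla F(q_k)\|$, and $\nabla F$ is small near $q_j^\star$ since $\nabla F(q_j^\star)=0$. So we shrink to a smaller level $m'<m$ such that every point of the sublevel component $\{F<m'\}$ lies at distance more than $\sup t_k\|\Delta q_k\|$ from $\partial B_r$. A single step then cannot exit $B_r$, and inside $B_r$ the iterate satisfies $F<m'$, so it stays in the same component. We take $U_j$ to be this smaller component; $U_1$ and $U_2$ can also be chosen disjoint.

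The iterates then lie in the compact set $\bar U_j$, so they have accumulation points. By Step 2 these are stationary, and the only stationary point in $B_r$ is $q_j^\star$. Bounded iterates whose accumulation points all equal $q_j^\star$ converge to it.

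Finally, reaching $U_{3-j}$ from $U_j$ would require some iterate outside $B_r$, where $F\ge m'$ along any connecting path. This contradicts $F(q_k)\le F(q_0)<m'$. So any transition between basins must increase $F$ at some iteration.

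\textbf{Main obstacle.} The delicate point is the discrete-jump issue in Step 3: sublevel confinement must be combined with a uniform step-length bound to rule out a step landing in another basin's sublevel set. A related concern is that the active constraints \eqref{eq:cons1}--\eqref{eq:cons3} alter the step. Without them, Step 1 is immediate. With them active, stationarity should be read as KKT stationarity for the constrained problem. The descent property still holds because $\Delta q=0$ is feasible for the QP whenever $q$ satisfies the constraints (in particular $h_i(q)\ge0$), so the QP objective at the optimum is no larger than at $\Delta q=0$. Apart from this reinterpretation of stationarity, we expect the argument to go through unchanged.
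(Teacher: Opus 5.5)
The paper states this proposition (and Lemma~\ref{lem:dik_gn}) without proof, so there is no argument of the authors to compare against. Your outline is the standard one: Armijo-type sufficient decrease makes limit points stationary, and a sublevel-set capture argument with a step-length bound gives confinement. Steps~1--2 are sound for the unconstrained QP once the step really is a Gauss--Newton step for $F$ (see below).

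\textbf{The gap is in Step~3.} There you choose $r$ so that $\bar B_r(q_j^\star)$ contains no stationary point other than $q_j^\star$. This does not follow from ``isolated strict local minimizer.'' Isolation excludes other \emph{minimizers}, not saddles or degenerate critical points, and for $C^2$ functions the latter can accumulate at such a minimizer. Here is a counterexample.
\begin{itemize}
\item Take $\phi(s)=\int_0^{|s|}u^5\sin^2(1/u)\,du$. It is $C^2$, positive for $s\neq0$, and strictly monotone on each side of $0$, so $0$ is the only local minimizer. Yet $\phi'(\pm1/(m\pi))=0$ for every $m\in\mathbb{N}$.
\item The IK structure does not rule this out. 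Any $C^2$ function exceeding $\tfrac12\|q-q_{\mathrm{ref}}\|_{W_q}^2$ has the form \eqref{eq:nonlinear_ik_obj} with the scalar task $x=(2F-\|q-q_{\mathrm{ref}}\|_{W_q}^2)^{1/2}$ and $x^d=0$.
\item With $F=c+\phi(q-q_j^\star)$ and $s=q-q_j^\star$, start an iterate slightly above $s_m=1/((m+1)\pi)$. There $F'\approx s_m(s-s_m)^2$, so the iterate decreases monotonically toward $s_m$ without crossing it. It converges to $q_j^\star+s_m\neq q_j^\star$.
\end{itemize}
Since $m$ is arbitrary, no neighborhood $U_j$ has the property you claim. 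This is exactly why the proposition hedges with ``(typically $q_j^\star$).''

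Without an extra ingredient, your Steps~2--3 only show that $\nabla F(q_k)\to0$ and that the accumulation set is a compact connected set of stationary points in $\bar U_j$. That is not yet convergence. Two ways to close the gap:
\begin{itemize}
\item Assume $\nabla^2F(q_j^\star)\succ0$; then $q_j^\star$ is an isolated stationary point.
\item Use the fact that serial-chain forward kinematics is real-analytic. The \L{}ojasiewicz inequality $|F(q)-F(q_j^\star)|^{\theta}\le C\|\nabla F(q)\|$ then holds near $q_j^\star$. It forces any nearby stationary point to satisfy $F=F(q_j^\star)$, hence to equal $q_j^\star$ by strictness. Combined with your sufficient-decrease and angle bounds, it also gives convergence of the iterates directly.
\end{itemize}

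\textbf{Two further points.}

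First, the identity $\Delta q_k=-H_k^{-1}\nabla F(q_k)$ is what Lemma~\ref{lem:dik_gn} asserts, but it does not match the QP. The QP \eqref{eq:pc_qp_obj} penalizes $\|\Delta q\|_{W_q}$ rather than $\|q_k+\Delta q-q_{\mathrm{ref}}\|_{W_q}$. Its unconstrained minimizer is therefore $-H_k^{-1}\nabla G(q_k)$ with $G=\tfrac12\|x(q)-x^d\|_{W_x}^2$, which differs from your formula by $H_k^{-1}W_q(q_k-q_{\mathrm{ref}})$. Either change the regularizer or run the whole argument with $G$ as the merit function.

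Second, the constrained case does not go through ``unchanged.'' Feasibility of $\Delta q=0$ gives only model decrease. With an active joint-limit or CBF constraint, the QP step can vanish where $\nabla F\neq0$; this is precisely the stagnation the paper discusses. The bound $\nabla F^\top\Delta q_k\le-c\|\nabla F\|^2$ that drives Step~2 is then lost. You would need a KKT residual as the stationarity measure, continuity of the parametric QP solution under a constraint qualification, and $q_j^\star$ to be a constrained minimizer.

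A minor point on the jump control: simply take $U_j=\{q\in B_r: F(q)<m'\}$. Landing in this set does not by itself keep you in the same component. It is isolation of $q_j^\star$ as a minimizer that makes the set connected once it lies in a small ball.
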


\begin{remark}
\textup{Since $F(q)$ is generally nonconvex for humanoid kinematics and often admits multiple solution branches (\textit{e.g.}, elbow up vs. down), Proposition~\ref{thm:local_min_trap} implies that differential IK is inherently basin-dependent and may converge to suboptimal local minima dictated by initialization and regularization, known as ''basin-trapping''.}
\end{remark}

\subsection{Basin Escape via Increased Coverage}
\label{subsec:escape_prob}
The continuation strategy is designed to mitigate basin-trapping behavior that can arise when a single-shot update (\textit{i.e.}, differential IK with $\alpha=1, \:t_k = 1$ only) repeatedly produces steps constrained by the same active set (\textit{e.g.}, joint locks, singularity, CBF facets),
leading to stagnation.

\paragraph{Progress-certified Escape}
Let $F(q)$ denote the global regularized IK objective in \eqref{eq:nonlinear_ik_obj}.
We define an objective-based escape event for a candidate step $\Delta q$:
\begin{equation}
\mathcal{E}(q,\Delta q) :=
\left\{F(q+\Delta q)\le F(q)-\eta_F\right\},
\label{eq:escape_event}
\end{equation}
with $\eta_F>0$ above numerical tolerance. In our system, we additionally
require the Lyapunov progress certificate \eqref{eq:lyapunov_pc} to avoid
selecting constraint-dominated stagnant candidates; hence, we consider the \emph{progress-certified escape} event
\begin{equation}
\mathcal{E}_{\mathrm{pc}}(q,\Delta q) := \mathcal{E}(q,\Delta q)\ \cap\ \mathcal{P}(q,\Delta q).
\label{eq:escape_event_pc}
\end{equation}

\paragraph{Escaping $\alpha$-set}
For a fixed $q$, define the set of continuation parameters that would yield a progress-certified escaping update:
\begin{equation}
\mathcal{A}^{\mathrm{pc}}_{\mathrm{esc}}(q) :=
\left\{\alpha\in[0,1]:\ \mathcal{E}_{\mathrm{pc}}(q,\Delta q(\alpha))\ \text{holds}\right\}.
\end{equation}
Let $p_{\mathrm{pc}}(q):=\mu(\mathcal{A}^{\mathrm{pc}}_{\mathrm{esc}}(q))\in[0,1]$ be its Lebesgue measure \cite{bartle2014elements}.
Although our implementation uses a deterministic grid, it is useful to state a randomized continuation variant formalizing the benefit of increasing the number of candidates $K$.

\begin{figure*}[!t]
\vspace{0.2cm}
    \center	
    \includegraphics[clip, trim=1cm 10.5cm 1cm 0cm, width=2\columnwidth]{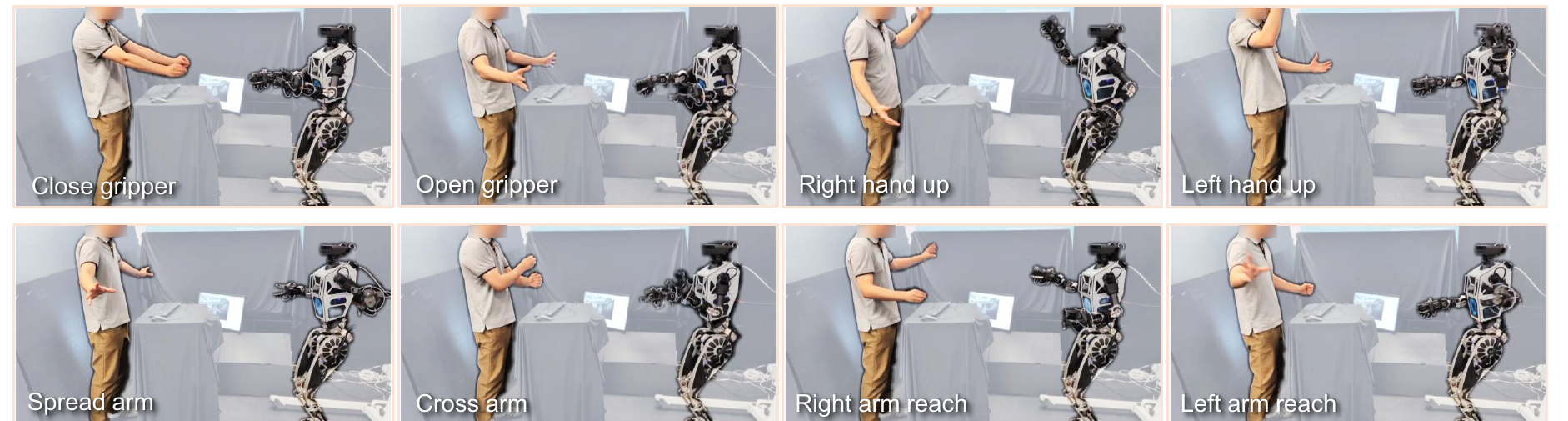}
    \caption{{\bfseries Snapshots of real-time teleoperation.} Pose mirroring and mimicking. }
    \label{fig:mirroring}
    \vspace{-0.2cm}
\end{figure*}

\begin{theorem}[Progress-certified escape probability increases with $K$]
\label{thm:escape_prob_pc}
Fix a state $q$ and draw $\alpha_1,\dots,\alpha_K$ i.i.d.\ from $\mathrm{Unif}[0,1]$.
If $p_{\mathrm{pc}}(q)>0$, then the probability that at least one of the $K$ candidates is progress-certified escaping satisfies
\begin{equation}
\mathbb{P}\Bigl(\exists k\le K:\ \alpha_k\in\mathcal{A}^{\mathrm{pc}}_{\mathrm{esc}}(q)\Bigr)
= 1-(1-p_{\mathrm{pc}}(q))^K,
\end{equation}
which strictly increases in $K$ and converges to $1$ as $K\to\infty$.
\end{theorem}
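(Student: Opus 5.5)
The argument is a direct complement-event calculation, and the only substantive issue is measurability of $\mathcal{A}^{\mathrm{pc}}_{\mathrm{esc}}(q)$. For fixed $q$, the quantities $J(q)$, $e(q)$, $h_i(q)$ and $\nabla h_i(q)$ are constant. The QP \eqref{eq:pc_qp_obj}--\eqref{eq:cons3} is then a strongly convex quadratic program. Its linear term $-\alpha J(q)^\top W_x e(q)$ depends affinely on $\alpha$, and its feasible polyhedron does not depend on $\alpha$ at all. By standard parametric QP theory, the unique minimizer $\alpha\mapsto\Delta q(\alpha)$ is continuous, and indeed piecewise affine, on $[0,1]$ whenever the feasible set is nonempty. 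If the feasible set is empty, no candidate is escaping, so $p_{\mathrm{pc}}(q)=0$ and the hypothesis excludes this case.

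The membership condition is built from the following pieces:
\begin{itemize}
\item the escape event \eqref{eq:escape_event}, whose left side $F(q+\Delta q(\alpha))$ is continuous in $\alpha$ because $x(\cdot)$ is $C^2$;
\item the Lyapunov certificate \eqref{eq:stable_accept}, which is also continuous in $\alpha$;
\item the norm condition $\|\Delta q(\alpha)\|\ge\varepsilon_q$.
\end{itemize}
Each of these is a non-strict inequality between continuous functions of $\alpha$, so $\mathcal{A}^{\mathrm{pc}}_{\mathrm{esc}}(q)$ is a finite intersection of closed subsets of $[0,1]$. It is therefore closed and Borel, and $p_{\mathrm{pc}}(q)=\mu(\mathcal{A}^{\mathrm{pc}}_{\mathrm{esc}}(q))$ is well defined. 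This step matters most, because it is what makes the events $\{\alpha_k\in\mathcal{A}^{\mathrm{pc}}_{\mathrm{esc}}(q)\}$ legitimate. If $\mathcal{P}$ were only specified loosely, I would simply assume it is a Borel condition in $\alpha$.

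The probability is now a direct computation. Since $\alpha_k\sim\mathrm{Unif}[0,1]$, we have $\mathbb{P}(\alpha_k\in\mathcal{A}^{\mathrm{pc}}_{\mathrm{esc}}(q))=p_{\mathrm{pc}}(q)$. Independence of the $\alpha_k$ gives
\begin{equation*}
\mathbb{P}\bigl(\forall k\le K:\ \alpha_k\notin\mathcal{A}^{\mathrm{pc}}_{\mathrm{esc}}(q)\bigr)=\prod_{k=1}^K\bigl(1-p_{\mathrm{pc}}(q)\bigr)=(1-p_{\mathrm{pc}}(q))^K .
\end{equation*}
Taking the complement yields the claimed formula.

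It remains to show the monotonicity and the limit. Write $p=p_{\mathrm{pc}}(q)$ and $g(K)=1-(1-p)^K$. Then
\begin{equation*}
g(K+1)-g(K)=p(1-p)^K .
\end{equation*}
If $0<p<1$, this increment is strictly positive, and $(1-p)^K\to0$, so $g(K)\to1$. If $p=1$, then $g(K)=1$ for every $K\ge1$. The probability has already reached its limit but does not strictly increase, so I would note this edge case explicitly and read the strict-increase claim as holding for $0<p_{\mathrm{pc}}(q)<1$.

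Finally, the randomized result links back to the deterministic grid $\alpha_j=j/K$ actually used in the implementation. Because $\mathcal{A}^{\mathrm{pc}}_{\mathrm{esc}}(q)$ is closed, it contains an interval whenever it has nonempty interior. The grid then hits that interval as soon as $1/K$ is smaller than the interval's length. I would state this only as a remark, since it is not part of the theorem.
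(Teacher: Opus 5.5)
Your proof is correct and is essentially the paper's argument: each $\alpha_k$ lands in $\mathcal{A}^{\mathrm{pc}}_{\mathrm{esc}}(q)$ with probability $p_{\mathrm{pc}}(q)$, independence gives $(1-p_{\mathrm{pc}}(q))^K$ for the event that every candidate misses, and you take the complement. Your additions are all sound and fill in what the paper's one-line proof leaves implicit: closedness (hence measurability) of $\mathcal{A}^{\mathrm{pc}}_{\mathrm{esc}}(q)$ via continuity of the parametric QP solution, the explicit increment $p_{\mathrm{pc}}(q)(1-p_{\mathrm{pc}}(q))^K$ for monotonicity and the limit, and the observation that strict increase fails when $p_{\mathrm{pc}}(q)=1$.
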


\begin{proof}
Each $\alpha_k$ lies in $\mathcal{A}^{\mathrm{pc}}_{\mathrm{esc}}(q)$ with probability $p_{\mathrm{pc}}(q)$ under the uniform distribution. Independence yields
$\mathbb{P}(\forall k,\alpha_k\notin\mathcal{A}^{\mathrm{pc}}_{\mathrm{esc}}(q))=(1-p_{\mathrm{pc}}(q))^K$.
Taking the complement gives the claim.
\end{proof}

\begin{remark}
\textup{
A deterministic grid $\alpha_j=j/K$ does not yield the i.i.d.\ expression in Theorem~\ref{thm:escape_prob_pc},
but it provides uniform coverage over $[0,1]$. Empirically, increasing $K$ increases the likelihood of encountering different active-set regimes in the parametric QP solutions, improving basin-escape behavior.}
\end{remark}

\subsection{Distributed QPs and Increased Escape Likelihood}
\label{subsec:distributed_escape}

We now contrast a whole-body IK-QP in $\mathbb{R}^n$ with a distributed formulation that solves multiple smaller subproblems. In our upper-body implementation, we solve two arm QPs with a substantially smaller variable dimension per QP.

Let $\Delta q_R\in\mathbb{R}^{d}$ and $\Delta q_L\in\mathbb{R}^{d}$ denote the right and left arm decision variables,
where $d < n$. The composed full update is
\begin{equation}
\Delta q = P_R \Delta q_R + P_L \Delta q_L,
\end{equation}
where $P_R,P_L$ are fixed embedding matrices. (\textit{i.e.}, with the floating base being shared, $P_R$ and $P_L$ overlap on those DOFs and resolve this by averaging.)
Each subproblem solves a continuation QP \eqref{eq:pc_qp_obj} with the corresponding Jacobians and constraints for that segment.

\noindent \textbf{1) Compute--coverage advantage:}
\begin{lemma}[Escape probability increases with $K$]
\label{lem:escape_vs_K}
\vspace{-0.2cm}
Assume each candidate has probability at least $p$ of satisfying an escape certificate,
and candidate trials are independent (or weakly dependent).
Then the probability of obtaining at least one escaping candidate from $K$ evaluations is at least
$1-(1-p)^K$, which increases with $K$.
\end{lemma}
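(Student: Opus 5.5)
The plan is to reduce the claim to the same complement-and-product argument used for Theorem~\ref{thm:escape_prob_pc}, now with only a lower bound $p$ on each candidate's success probability. For $k=1,\dots,K$, let $E_k$ be the event that the $k$-th candidate, i.e.\ the update $\Delta q(\alpha_k)$ returned by the corresponding arm QP \eqref{eq:pc_qp_obj}, satisfies the escape certificate $\mathcal{E}_{\mathrm{pc}}$ in \eqref{eq:escape_event_pc}. The target event is $\bigcup_{k\le K} E_k$. Its probability is $1-\mathbb{P}(\bigcap_{k\le K} E_k^c)$, so it suffices to show
\[
\mathbb{P}\Bigl(\bigcap_{k\le K} E_k^c\Bigr)\le (1-p)^K .
\]

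\textbf{Independent case.} Under exact independence, $\mathbb{P}(\bigcap_k E_k^c)=\prod_k \mathbb{P}(E_k^c)$. The hypothesis $\mathbb{P}(E_k)\ge p$ gives $\mathbb{P}(E_k^c)\le 1-p$, so the product is at most $(1-p)^K$. Taking complements yields the stated bound. Monotonicity in $K$ follows because $K\mapsto 1-(1-p)^K$ is nondecreasing for $p\in[0,1]$. It is strictly increasing when $0<p<1$, and it tends to $1$, exactly as in Theorem~\ref{thm:escape_prob_pc}.

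\textbf{Weakly dependent case.} The main obstacle is that ``weakly dependent'' has no precise meaning in the statement. I would fix it as the sequential condition
\[
\mathbb{P}\bigl(E_k^c \,\big|\, E_1^c,\dots,E_{k-1}^c\bigr)\le 1-p
\quad\text{for all } k,
\]
that is, each candidate still has conditional success probability at least $p$ given that all earlier candidates failed. The chain rule then gives
\[
\mathbb{P}\Bigl(\bigcap_{k\le K} E_k^c\Bigr)=\prod_{k\le K}\mathbb{P}\bigl(E_k^c\,\big|\,E_1^c,\dots,E_{k-1}^c\bigr)\le (1-p)^K,
\]
and the same conclusion follows. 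Independence is a special case of this condition, so both cases are covered by one argument.

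\textbf{Distributed setting.} For the two-arm formulation of Sec.~\ref{subsec:distributed_escape}, I would apply the bound to the composed candidate $P_R\Delta q_R+P_L\Delta q_L$, with $p$ taken as a lower bound on the per-candidate escape probability of the composed update. No separate argument is needed, since the bound only uses the per-candidate lower bound and the dependence condition above. Without some condition of this kind, for example if all $K$ candidates coincide, the union has probability only $p$ and the bound fails. I would therefore state the conditional assumption explicitly as the meaning of ``weakly dependent.''
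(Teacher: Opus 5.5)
Your proof is correct and follows the paper's route: the paper simply defers to the complement-and-product argument of Theorem~\ref{thm:escape_prob_pc}, and the only change needed is $\mathbb{P}(E_k^c)\le 1-p$ in place of equality. Your sequential condition $\mathbb{P}(E_k^c\mid E_1^c,\dots,E_{k-1}^c)\le 1-p$ goes beyond the paper and is worth keeping, since the paper never makes ``weakly dependent'' precise, and your coincident-candidates example shows the bound can fail without some hypothesis of this kind.
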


\begin{proof}
Same as Theorem~\ref{thm:escape_prob_pc}.
\end{proof}

Let $C(m)$ denote the average compute cost of solving a constrained QP of dimension $m$.
For many solvers, $C(m)$ grows superlinearly with $m$ (\textit{e.g.}, cubic for dense factorizations).
Under a fixed wall-clock budget $B$,
\begin{equation}
K_{\mathrm{whole}} \approx \frac{B}{C(n)},\qquad
K_{\mathrm{dist}} \approx \frac{B}{N_s\,C(d)},
\end{equation}
where $N_s$ is the number of segments (here $N_s=2$ arms). Since $d < n$,
we typically have $K_{\mathrm{dist}}> K_{\mathrm{whole}}$.
By Lemma~\ref{lem:escape_vs_K},
this yields a principled compute–coverage advantage: distributed QPs enable more continuation candidates per cycle, which increases the likelihood of encountering an escaping candidate.

\noindent \textbf{2) Dimensionality advantage:}
\begin{assumption}[Local escape neighborhood]
\label{ass:escape_ball}
\vspace{-0.2cm}
At a given state $q$, suppose there exists an escaping feasible increment $\Delta q^\star$ and $r>0$ such that
\begin{equation}
\mathbb{B}_r(\Delta q^\star)\cap \mathcal{S}_{\Delta} \subseteq \mathcal{X}_{\mathrm{esc}},
\end{equation}
where $\mathcal{S}_{\Delta}$ is the feasible increment set and $\mathcal{X}_{\mathrm{esc}}$ is the set of feasible escaping increments under \eqref{eq:escape_event}. An analogous assumption holds in each segment space.
\end{assumption}

\begin{lemma}[Volume ratio decays exponentially with dimension]
\label{lem:vol_ratio}
Let $\mathcal{S}\subset\mathbb{R}^m$ be contained in a ball of radius $R$ and suppose candidates are approximately spread over $\mathcal{S}$. Then the probability of hitting an escaping neighborhood of radius $r$ is lower bounded by a volume ratio:
\begin{equation}
p \ \gtrsim\ \frac{\mathrm{vol}(\mathbb{B}_r)}{\mathrm{vol}(\mathbb{B}_R)}
= \left(\frac{r}{R}\right)^m.
\end{equation}
Thus, for fixed $r/R<1$, the hit probability decays exponentially with dimension $m$.
\end{lemma}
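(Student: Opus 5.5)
The plan is to make the informal phrase ``candidates are approximately spread over $\mathcal{S}$'' precise by modeling each candidate increment as a random point with a density on $\mathcal{S}$. Comparing that density with the uniform density on the enclosing ball $\mathbb{B}_R$ will give the bound. Concretely, I will assume the candidate distribution has density $\rho$ on $\mathcal{S}$ with $\rho \ge c/\mathrm{vol}(\mathbb{B}_R)$ for some constant $c>0$ independent of $m$. This is the meaning I assign to ``$\gtrsim$''; the uniform law on $\mathcal{S}$ satisfies it with $c=1$, since $\mathrm{vol}(\mathcal{S})\le \mathrm{vol}(\mathbb{B}_R)$.

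The first step invokes Assumption~\ref{ass:escape_ball}. It gives an escaping increment $\Delta q^\star$ and a radius $r$ with $\mathbb{B}_r(\Delta q^\star)\cap\mathcal{S}\subseteq\mathcal{X}_{\mathrm{esc}}$. Hence
\begin{equation*}
p=\mathbb{P}(\Delta q\in\mathcal{X}_{\mathrm{esc}})\ \ge\ \int_{\mathbb{B}_r(\Delta q^\star)\cap\mathcal{S}}\rho\ \ge\ c\,\frac{\mathrm{vol}(\mathbb{B}_r(\Delta q^\star)\cap\mathcal{S})}{\mathrm{vol}(\mathbb{B}_R)}.
\end{equation*}

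The second step is the main obstacle: lower-bounding $\mathrm{vol}(\mathbb{B}_r(\Delta q^\star)\cap\mathcal{S})$ by $\mathrm{vol}(\mathbb{B}_r)$, possibly up to a factor. If $\Delta q^\star$ lies at least distance $r$ inside $\mathcal{S}$, the intersection is the whole ball. In general $\Delta q^\star$ may sit on an active constraint facet, such as a CBF half-space or a joint box. Since $\mathcal{S}$ is a convex polytope (an intersection of half-spaces and boxes), I would try to shrink to a ball of radius $r'\le r$ contained in $\mathbb{B}_r(\Delta q^\star)\cap\mathcal{S}$. An interior (Slater) point makes this possible, and it yields a bound of the form $(r'/R)^m$, which has the same form with a smaller ratio. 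Alternatively, I would absorb a bounded-below solid-angle fraction into the constant $c$. I will state this interiority explicitly as part of the ``$\gtrsim$''.

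The final step uses homogeneity of Lebesgue measure, $\mathrm{vol}(\mathbb{B}_s)=\omega_m s^m$ with the same $\omega_m$ for both balls. This gives the ratio $(r/R)^m=\exp(-m\log(R/r))$, which decays exponentially in $m$ for fixed $r/R<1$.

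I will close by remarking how this feeds Lemma~\ref{lem:escape_vs_K}. It gives a per-candidate $p\gtrsim (r/R)^d$ in each arm segment versus $(r/R)^n$ for the whole-body QP, so the distributed formulation has a larger per-candidate hit probability.
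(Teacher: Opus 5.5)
Your proposal follows the paper's route: bound the hit probability below by $\mathrm{vol}(\mathbb{B}_r)/\mathrm{vol}(\mathbb{B}_R)$ using $\mathcal{S}\subseteq\mathbb{B}_R$, then use $\mathrm{vol}(\mathbb{B}_s)=\omega_m s^m$. It is in fact more careful than the paper's one-line proof, which silently assumes a uniform-type law on $\mathcal{S}$ and $\mathbb{B}_r(\Delta q^\star)\subseteq\mathcal{S}$, whereas you state these explicitly via $\rho\ge c/\mathrm{vol}(\mathbb{B}_R)$ and the interior-ball shrinking to $r'$. As in the paper, you only show that the \emph{lower bound} decays like $(r/R)^m$, so claiming that $p$ itself decays, or that your $d$-versus-$n$ comparison orders the actual probabilities, would also need a matching upper bound, which neither argument supplies.
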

\begin{proof}
Since $\mathcal{S}\subseteq\mathbb{B}_R$, the probability of landing in $\mathbb{B}_r(\Delta q^\star)$ is at least its volume divided by the volume of an enclosing set. The $m$-dimensional ball volume scales as $r^m$.
\end{proof}

Lemma~\ref{lem:vol_ratio} implies that if escaping updates occupy a small ``tube'' or neighborhood in increment space, then reaching them becomes dramatically harder as the dimension increases. Solving distributed QPs in $\mathbb{R}^{d}$
(with $d< n$) increases the per-candidate chance of landing in an escaping region compared to a whole-body search in
$\mathbb{R}^{n}$.

\begin{remark}
    \textup{
    Combining the compute--coverage advantage (\textit{i.e.}, more candidates per time step) and the dimensionality advantage (\textit{i.e.}, less needle-like escape regions) yields that distributed parallel continuation can increase the likelihood of producing an escaping candidate update within a fixed time budget, while ensuring constraint satifaction.
    }
\end{remark}

\begin{figure*}[!t]
\vspace{0.2cm}
    \center	
    \includegraphics[clip, trim=1cm 5cm 1cm 0cm, width=2\columnwidth]{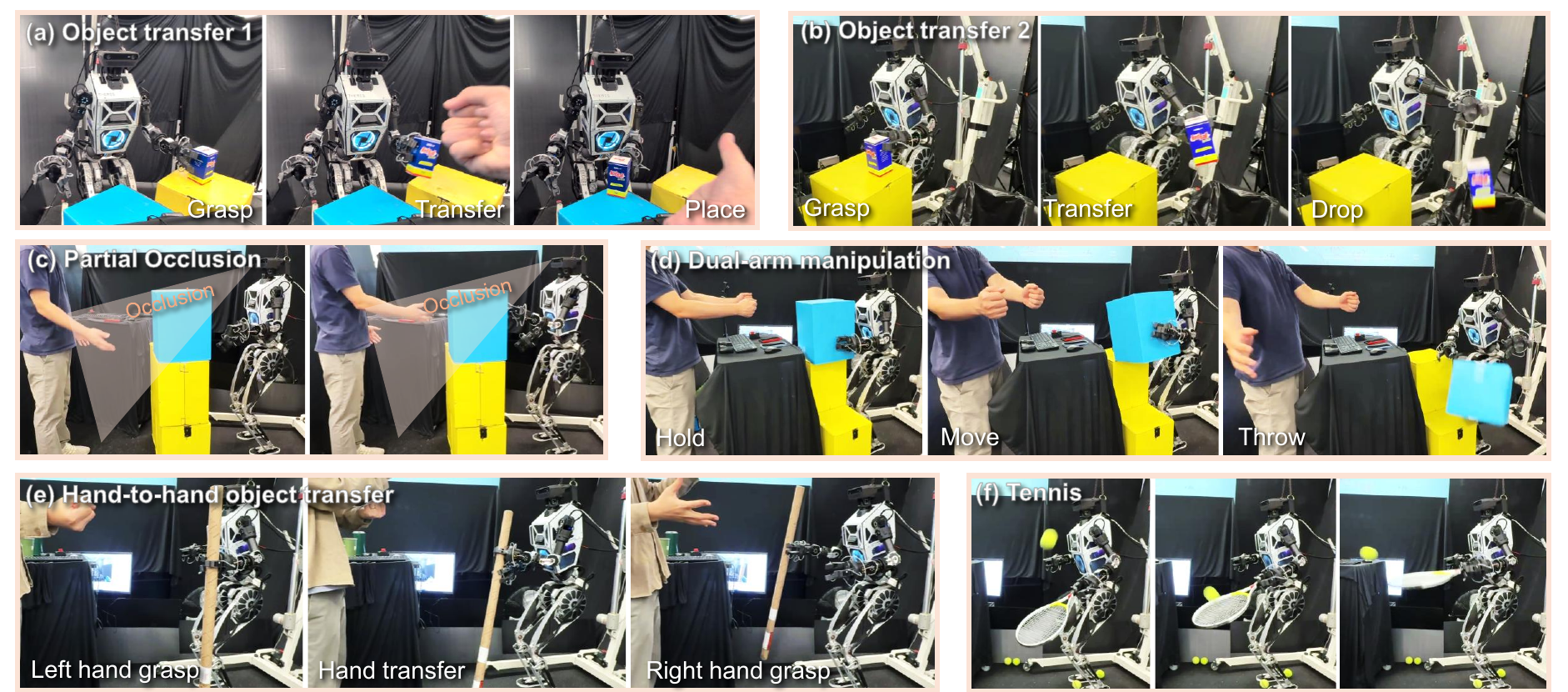}
    \caption{{\bfseries Snapshots of useful real-world tasks through MIRROR teleoperation.} }
    \label{fig:tasks}
    \vspace{-0.2cm}
\end{figure*}

\section{Results}
\label{sec:Results}

In this section, we evaluate the proposed MIRROR framework in both simulation and hardware environments.

\subsection{Validation Setup}

All experiments are conducted using the full pipeline described in Sec.~\ref{sec:approach}.
Simulation experiments are performed using the MuJoCo simulation backend. 
Hardware experiments are conducted on Westwood Robotics' \textit{THEMIS V2 Pro} humanoid robot with 40 actuated DOF. 

The simulation testing platform is backed by a Ubuntu 22 desktop equipped with an Intel Ultra 9 285K and an NVIDIA RTX 5090.
All CPU-based IK subproblems are formulated as constrained quadratic programs and solved using \texttt{OSQP}.
The GPU-accelerated distributed batch implementation is realized via a customized \texttt{ADMM} solver for parallel evaluation in \texttt{PyTorch}. The robot kinematics and dynamics evaluation uses a customized Python implementation of Spatial V2 \cite{featherstone2014rigid}.


\subsection{Robustness of Pose Estimation and Latency}

\begin{table*}[!t]
\centering
\caption{Ablation study on differential IK solving methods over long-horizon hand tracking.}
\label{tab:multistep_comparison}
\setlength{\tabcolsep}{5pt}
\renewcommand{\arraystretch}{1.15}
\scriptsize
\begin{tabular}{l c c c c  c c c c}
\toprule \toprule
\textbf{Method}
& \textbf{Batch $K$}
& $\boldsymbol{\eta}$
& \textbf{Solve Time [ms] $\downarrow$}
& $\boldsymbol{||\bar e_\textrm{hand,l}||}$ \textbf{[mm] $\downarrow$}
& $\boldsymbol{||\bar e_\textrm{hand,l}^\textrm{final}||}$ \textbf{[mm] $\downarrow$}
& \textbf{Self Collision}$\downarrow$
& \textbf{Singularity}$\downarrow$
& \textbf{Stagnation}$\downarrow$
\\
\midrule

Global IK (SQP)
& 1
& --
& $176.57\pm4.20$
& $12.44\pm3.12$
& $6.58\pm2.23$
& $42/250$
& $0/250$
& $0/250$
\\
\midrule

\greytext{Monolithic} QP
& 1
& --
& $0.58\pm0.05$
& $18.57\pm9.82$
& $16.23\pm4.12$
& $135/250$
& $17/250$
& $44/250$
\\
\midrule

Distributed QP
& 1
& --
& $0.94\pm0.06$
& $17.72\pm10.68$
& $10.44\pm4.56$
& $142/250$
& $19/250$
& $46/250$
\\
\midrule

\multirow{3}{*}{\shortstack[l]{Parallel Dist. QPs\\ {\scriptsize - Continuation \greentext{$\checkmark$}} \\ {\scriptsize - $V$ certificate \todo{$\times$}}}}
& $256$  & -- 
& $4.28\pm0.14$ 
& $16.61\pm7.95$ 
& $11.45\pm3.34$ 
& $68/250$ & $9/250$ & $45/250$ \\
\arrayrulecolor{gray!50}\cmidrule(l){2-9}\arrayrulecolor{black}
& $4096$ & -- 
& $4.08\pm0.06$ 
& $16.61\pm7.95$ 
& $11.45\pm3.34$ 
& $68/250$ & $9/250$ & $45/250$ \\
\midrule

\multirow{3}{*}{\shortstack[l]{  Parallel \greytext{Mono.} QPs\\ {\scriptsize - Continuation \greentext{$\checkmark$}} \\ {\scriptsize - $V$ certificate \greentext{$\checkmark$}}}}
& $256$  & 0.0005 
& $3.94\pm0.09$ 
& $16.61\pm7.86$ 
& $10.38\pm4.20$ 
& $52/250$ & $9/250$ & $45/250$ \\
\arrayrulecolor{gray!50}\cmidrule(l){2-9}\arrayrulecolor{black}
& $4096$ & 0.0005 
& $3.76\pm0.03$ 
& $16.78\pm7.33$ 
& $10.54\pm4.12$ 
& $47/250$ & $7/250$ & $31/250$ \\
\midrule

\multirow{3}{*}{\shortstack[l]{\textbf{Parallel Dist. QPs}\\ {\scriptsize - Continuation \greentext{$\checkmark$}} \\ {\scriptsize - $V$ certificate \greentext{$\checkmark$}}}}
& $256$  & $0.0005$ 
& $4.27\pm0.09$ 
& \cellcolor{Lavender!40} $16.63\pm7.88$ 
& \cellcolor{Lavender!40} $10.34\pm4.11$ 
& \cellcolor{Lavender!10} $\bf 41/250$ & \cellcolor{Lavender!10} $\bf 9/250$ &  \cellcolor{Lavender!10} $\bf 28/250$ \\
\arrayrulecolor{gray!50}\cmidrule(l){2-9}\arrayrulecolor{black}
& $4096$ & $0.0005$ 
& $4.10\pm0.04$ 
& \cellcolor{Lavender!40}$16.67\pm7.58$ 
& \cellcolor{Lavender!40}$10.30\pm4.45$ 
& \cellcolor{Lavender!20}$\bf 18/250$ & \cellcolor{Lavender!20}$\bf 7/250$ & \cellcolor{Lavender!20}$\bf 17/250$ \\
\arrayrulecolor{gray!50}\cmidrule(l){2-9}\arrayrulecolor{black}
& $4096$ & $0.005$ & 
$4.12\pm0.07$ 
& \cellcolor{Lavender!20} $18.57\pm6.89$ 
& \cellcolor{Lavender!20}$13.52\pm5.86$ 
& \cellcolor{Lavender!30}$\bf 10/250$ & \cellcolor{Lavender!30}$\bf 7/250$ & \cellcolor{Lavender!30}$\bf 11/250$ \\
\arrayrulecolor{gray!50}\cmidrule(l){2-9}\arrayrulecolor{black}
& $4096$ & $0.001$ 
& $4.14\pm0.07$ 
& \cellcolor{Lavender!10}$22.09\pm7.16$ 
& \cellcolor{Lavender!10}$15.65\pm4.41$ 
& \cellcolor{Lavender!40}$\bf 3/250$ & \cellcolor{Lavender!40}$\bf 5/250$ & \cellcolor{Lavender!40}$\bf 7/250$ \\

\bottomrule \bottomrule
\end{tabular}

\vspace{-0.15cm}
\begin{flushleft}
    {\footnotesize
 Lower is better ($\downarrow$). Heavier \colorbox{Lavender!30}{shade} means better results. Solve time is measured as the end-to-end duration of the entire IK problem, including Jacobian, CBF gradients, and batch overheads. Self-collision is determined based on CBF geometry in simulation instead of the actual robot geometry. Stagnation is counted when any joint reaches its joint limit and does not progress away from the joint limit (\textit{i.e.}, joint lock). \texttt{OSQP} setup: $\textrm{Max iter.} = 500$,  Tolerance $\textrm{eps}_\textrm{abs} = \textrm{eps}_\textrm{rel} = 1\text{e-}5$, Warm-start On; Custom \texttt{ADMM-QP} setup: $\textrm{Max iter.} = 50$, Tikhonov regularization $\sigma = 1\text{e-}6$, Penalty $\rho = 50$, Warm-start On.
}
\end{flushleft}

\vspace{-0.25cm}
\end{table*}

We evaluate the perception front-end by comparing tracking accuracy and smoothness against established motion capture systems. Human hand trajectories are recorded simultaneously using the proposed camera-based pipeline (Vision), a Meta Quest 3 VR headset (VR), and an OptiTrack system as ground truth, with task-space keypoints aligned in a common reference frame. As shown in Fig.~\ref{fig:comparison}, VR tracking closely follows OptiTrack with minor dropouts, while raw vision estimates exhibit higher-frequency noise. After filtering and robustification, the vision trajectories become substantially smoother and preserve overall motion trends and peak events, maintaining strong temporal consistency with the ground truth. These results indicate that the filtered vision pipeline provides relatively accurate and sufficiently smooth tracking for motion retargeting. Moreover, unlike the VR setup, which primarily tracks end-effectors, the vision-based approach supports full-body pose estimation in a lightweight fashion. 

\begin{figure}[!h]
\vspace{-0.2cm}
    \center	
    \includegraphics[clip, trim=0cm 0cm 0cm 0cm, width=1\columnwidth]{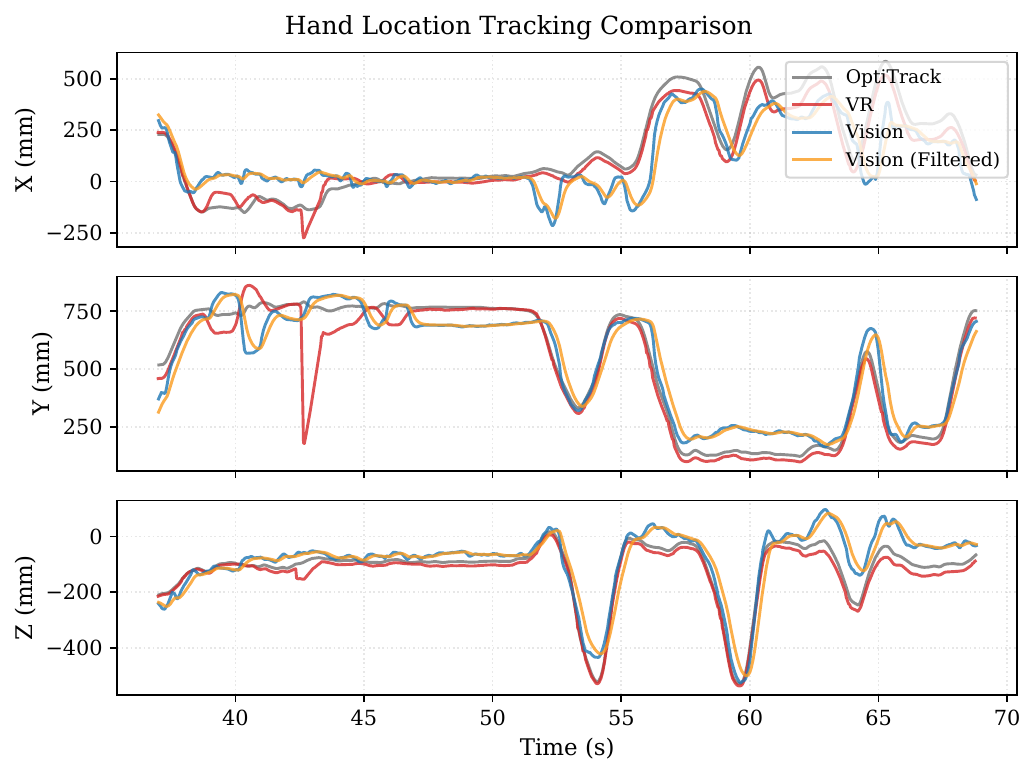}
    \caption{{\bfseries Comparison of motion capture devices tracking hand location.} }
    \label{fig:comparison}
    \vspace{-0.5cm}
\end{figure}

\subsection{Inverse Kinematics Ablation Study}

We compare the proposed continuation-based distributed parallel differential IK against several ablated baselines in simulation. 
We ablate key features, including parallel solver, task-space continuation, and Lyapunov-based progress certificate. 
Table~\ref{tab:multistep_comparison} evaluates IK variants over 50-step closed-loop tracking of moving hand targets for both arms across 250 randomized trials in simulation. While the global IK achieves the lowest tracking error, its high computational cost ($\sim$177 ms) makes it unsuitable for real-time control; in contrast, single-instance monolithic and distributed QPs are fast ($<$1 ms) but exhibit frequent self-collision, singularity violations, and stagnation due to basin trapping over long horizons. Although CBF constraints are explicit in QP, closed-loop invariance is not guaranteed, and self-collision can still arise in practice due to large step sizes, tracking performance, and controller delay.  
We also intentionally generate hand targets near or across CBF boundaries to stress-test the solver.

On the other hand, parallel continuation substantially improves robustness without sacrificing millisecond-level performance ($\sim$4 ms), and incorporating the Lyapunov-based progress certificate further reduces violations and stagnation, emphasizing a good trade-off between safety and accuracy. 
In the proposed approach, increasing the batch size $K$ consistently reduces CBF violations and stagnation events, empirically confirming that evaluating more continuation candidates improves the likelihood of selecting a progress-certified escaping update, as outlined by Theorem~\ref{thm:escape_prob_pc}. Meanwhile, tuning the progress threshold $\eta$ exposes a clear robustness–accuracy trade-off: larger $\eta$ enforces stronger Lyapunov descent and improves safety and feasibility, but at the cost of increased tracking error due to more conservative updates.

\subsection{Hardware Pipeline Latency}

We measure the end-to-end system latency of the MIRROR pipeline on hardware with 3 setups. A latency breakdown is presented in Fig.~\ref{fig:latency}, where the total latency from the perception front-end to the robot receiving the joint-space commands averaged at \textbf{54.9 ms} with the tethered desktop mode.  The completely untethered setups with onboard computing units are also real-time feasible, with a tradeoff of perception robustness and batch number in IK. We also measured the true End-to-End latency from human motion to robot motion by decomposing camera frames from recorded experiment videos, yielding a consistent latency of \textbf{$\sim$170 ms} (tethered PD mode) and \textbf{$\sim$250 ms} (tethered WBC mode), which surpasses existing major camera-based (tethered) teleoperation systems such as H2O\cite{he2024learning} ($\sim373$ ms) and HumanPlus \cite{fu2024humanplus} ($\sim340$ ms), and is comparable to other VR/MoCap-based pipelines, as reported in \cite{xiong2026extremcontrol}.

\begin{figure}[!t]
\vspace{-0.3cm}
    \center	
    \includegraphics[clip, trim=0cm 0cm 0cm 0cm, width=1\columnwidth]{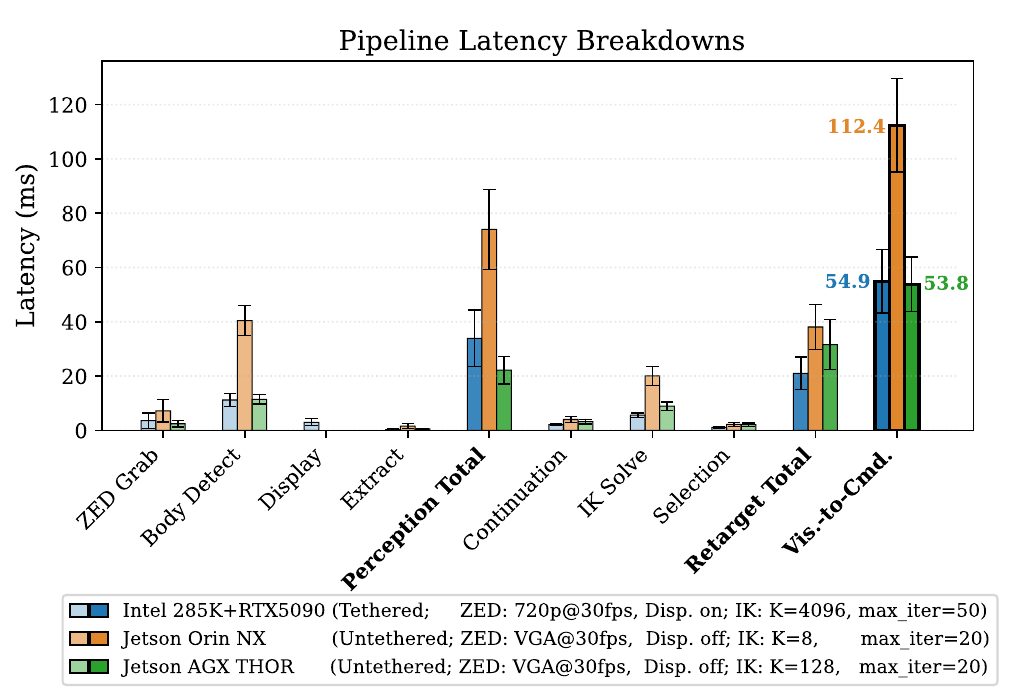}
    \caption{{\bfseries Latency breakdown with different hardware setups.} Only the main components of each module are presented here; overhead and communication latencies are only counted in the "Module Total" latencies.}
    \label{fig:latency}
    \vspace{-0.2cm}
\end{figure}

\subsection{Real-World Teleoperation Performance}
\begin{figure}[!t]
\vspace{0.0cm}
    \center	
    \includegraphics[clip, trim=0cm 0.3cm 0cm 0cm, width=1\columnwidth]{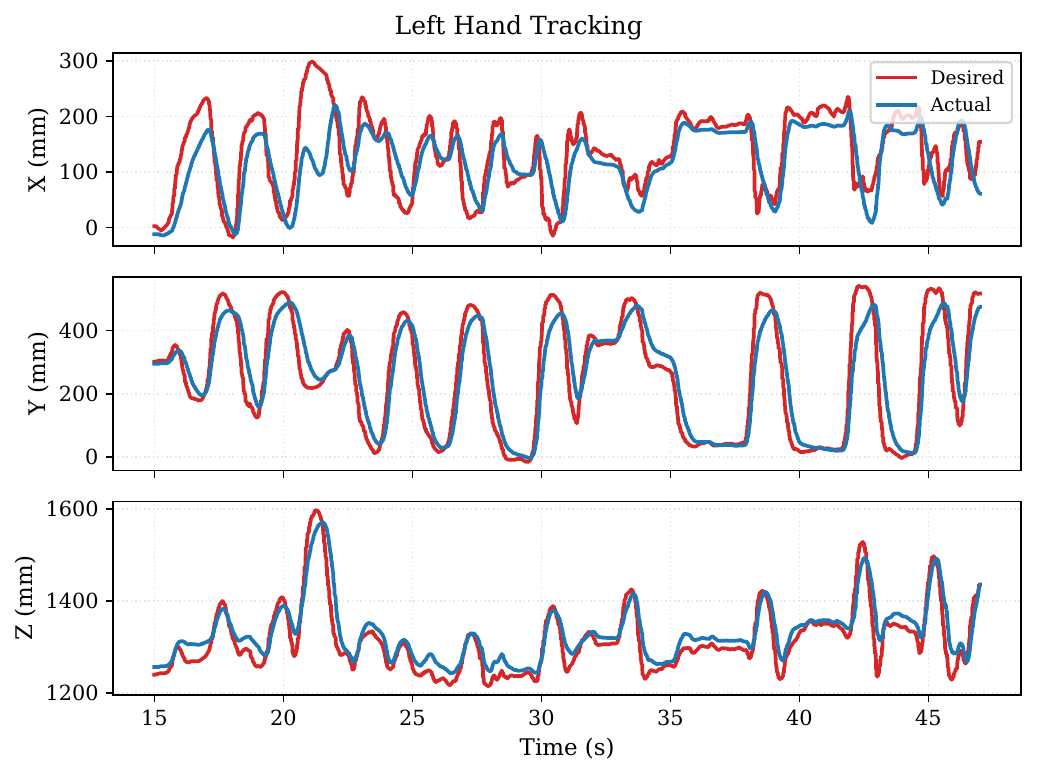}
    \caption{{\bfseries Left hand tracking performance on hardware.} Hardware is using an optimization-based whole-body controller to execute joint-level commands and balancing tasks.}
    \label{fig:tracking}
    \vspace{-0.4cm}
\end{figure}

Finally, we evaluate the proposed system in real-world teleoperation tasks using the THEMIS humanoid platform. The demonstrator performs a set of upper-body manipulation motions while the retargeted motions are executed in real time using the full pipeline, as shown in Fig. \ref{fig:mirroring} and Fig. \ref{fig:tasks}.
We report task-space tracking errors of the left hand from a representative trial, as shown in Fig. \ref{fig:tracking}. 
It is worth noting that tracking performance also depends on the underlying low-level controller; since MIRROR is agnostic to this layer, it can be paired with more advanced controllers to potentially achieve improved tracking performance.

\section{Conclusion and Discussion}
\label{sec:Conclusion}

In this work, we presented MIRROR, a real-time visual motion imitation and teleoperation framework that integrates stereo-based full-body perception with a GPU-accelerated, continuation-based parallel differential IK solver. The proposed method improves robustness to basin-dependent stagnation and constraint-induced trapping while preserving millisecond-level solve times, enabling stable, low-latency humanoid teleoperation in real-time. 

Although this work shows primarily upper-body motion retargeting, the distributed parallel formulation scales naturally to full-body by incorporating additional limb chains without additional computation and latency, thanks to parallelism in distributed subproblems. However, coordination constraints may require additional coupling terms/consensus.

\balance
\bibliographystyle{ieeetr}
\bibliography{reference.bib}

@article{gu2025humanoid,
  title={Humanoid locomotion and manipulation: Current progress and challenges in control, planning, and learning},
  author={Gu, Zhaoyuan and Li, Junheng and Shen, Wenlan and Yu, Wenhao and Xie, Zhaoming and McCrory, Stephen and Cheng, Xianyi and Shamsah, Abdulaziz and Griffin, Robert and Liu, C Karen and others},
  journal={arXiv preprint arXiv:2501.02116},
  year={2025}
}

@article{zago20203d,
  title={3D tracking of human motion using visual skeletonization and stereoscopic vision},
  author={Zago, Matteo and Luzzago, Matteo and Marangoni, Tommaso and De Cecco, Mariolino and Tarabini, Marco and Galli, Manuela},
  journal={Frontiers in bioengineering and biotechnology},
  volume={8},
  pages={181},
  year={2020},
  publisher={Frontiers Media SA}
}

@article{del2016implementing,
  title={Implementing torque control with high-ratio gear boxes and without joint-torque sensors},
  author={Del Prete, Andrea and Mansard, Nicolas and Ramos, Oscar E and Stasse, Olivier and Nori, Francesco},
  journal={International Journal of Humanoid Robotics},
  volume={13},
  number={01},
  pages={1550044},
  year={2016},
  publisher={World Scientific}
}

@software{pink,
  title = {{Pink: Python inverse kinematics based on Pinocchio}},
  author = {Caron, Stéphane and De Mont-Marin, Yann and Budhiraja, Rohan and Bang, Seung Hyeon and Domrachev, Ivan and Nedelchev, Simeon and Du, Peter and Escande, Adrien and Vaillant, Joris and Wingo, Bruce and Patapati, Santosh},
  license = {Apache-2.0},
  url = {https://github.com/stephane-caron/pink},
  version = {4.0.0},
  year = {2026}
}

@article{kobayashi2023mirror,
  title={Mirror-Descent Inverse Kinematics with Box-constrained Joint Space},
  author={Kobayashi, Taisuke and Jin, Takanori},
  journal={IFAC-PapersOnLine},
  volume={56},
  number={2},
  pages={300--305},
  year={2023},
  publisher={Elsevier}
}

@inproceedings{sundaralingam2023curobo,
  title={Curobo: Parallelized collision-free robot motion generation},
  author={Sundaralingam, Balakumar and Hari, Siva Kumar Sastry and Fishman, Adam and Garrett, Caelan and Van Wyk, Karl and Blukis, Valts and Millane, Alexander and Oleynikova, Helen and Handa, Ankur and Ramos, Fabio and others},
  booktitle={2023 IEEE International Conference on Robotics and Automation (ICRA)},
  pages={8112--8119},
  year={2023},
  organization={IEEE}
}

@article{ames2022ikflow,
  title={Ikflow: Generating diverse inverse kinematics solutions},
  author={Ames, Barrett and Morgan, Jeremy and Konidaris, George},
  journal={IEEE Robotics and Automation Letters},
  volume={7},
  number={3},
  pages={7177--7184},
  year={2022},
  publisher={IEEE}
}

@article{ma2020coupled,
  title={Coupled control systems: Periodic orbit generation with application to quadrupedal locomotion},
  author={Ma, Wen-Loong and Csomay-Shanklin, Noel and Ames, Aaron D},
  journal={IEEE Control Systems Letters},
  volume={5},
  number={3},
  pages={935--940},
  year={2020},
  publisher={IEEE}
}

@article{ames2016control,
  title={Control barrier function based quadratic programs for safety critical systems},
  author={Ames, Aaron D and Xu, Xiangru and Grizzle, Jessy W and Tabuada, Paulo},
  journal={IEEE Transactions on Automatic Control},
  volume={62},
  number={8},
  pages={3861--3876},
  year={2016},
  publisher={IEEE}
}

@article{xiong2026extremcontrol,
  title={ExtremControl: Low-Latency Humanoid Teleoperation with Direct Extremity Control},
  author={Xiong, Ziyan and Fang, Lixing and Huang, Junyun and Yamazaki, Kashu and Zhang, Hao and Gan, Chuang},
  journal={arXiv preprint arXiv:2602.11321},
  year={2026}
}

@article{fu2024humanplus,
  title={Humanplus: Humanoid shadowing and imitation from humans},
  author={Fu, Zipeng and Zhao, Qingqing and Wu, Qi and Wetzstein, Gordon and Finn, Chelsea},
  journal={arXiv preprint arXiv:2406.10454},
  year={2024}
}

@inproceedings{colin2023whole,
  title={Whole-body dynamic telelocomotion: A step-to-step dynamics approach to human walking reference generation},
  author={Colin, Guillermo and Byrnes, Joseph and Sim, Youngwoo and Wensing, Patrick M and Ramos, Joao},
  booktitle={2023 IEEE-RAS 22nd International Conference on Humanoid Robots (Humanoids)},
  pages={1--8},
  year={2023},
  organization={IEEE}
}

@inproceedings{amatucci2024accelerating,
  title={Accelerating model predictive control for legged robots through distributed optimization},
  author={Amatucci, Lorenzo and Turrisi, Giulio and Bratta, Angelo and Barasuol, Victor and Semini, Claudio},
  booktitle={2024 IEEE/RSJ International Conference on Intelligent Robots and Systems (IROS)},
  pages={12734--12741},
  year={2024},
  organization={IEEE}
}

@article{yasutake2025hjcd,
  title={HJCD-IK: GPU-Accelerated Inverse Kinematics through Batched Hybrid Jacobian Coordinate Descent},
  author={Yasutake, Cael and Kingston, Zachary and Plancher, Brian},
  journal={arXiv preprint arXiv:2510.07514},
  year={2025}
}

@article{giamou2022convex,
  title={Convex iteration for distance-geometric inverse kinematics},
  author={Giamou, Matthew and Mari{\'c}, Filip and Rosen, David M and Peretroukhin, Valentin and Roy, Nicholas and Petrovi{\'c}, Ivan and Kelly, Jonathan},
  journal={IEEE Robotics and Automation Letters},
  volume={7},
  number={2},
  pages={1952--1959},
  year={2022},
  publisher={IEEE}
}

@book{bartle2014elements,
  title={The elements of integration and Lebesgue measure},
  author={Bartle, Robert G},
  year={2014},
  publisher={John Wiley \& Sons}
}

@inproceedings{he2025novel,
  title={A Novel Telelocomotion Framework with CoM Estimation for Scalable Locomotion on Humanoid Robots},
  author={He, An-Chi and Li, Junheng and Park, Jungsoo and Kolt, Omar and Beiter, Ben and Leonessa, Alexander and Nguyen, Quan and Hamed, Kaveh Akbari},
  booktitle={2025 IEEE International Conference on Robotics and Automation (ICRA)},
  pages={5615--5621},
  year={2025},
  organization={IEEE}
}

@article{darvish2023teleoperation,
  title={Teleoperation of humanoid robots: A survey},
  author={Darvish, Kourosh and Penco, Luigi and Ramos, Joao and Cisneros, Rafael and Pratt, Jerry and Yoshida, Eiichi and Ivaldi, Serena and Pucci, Daniele},
  journal={IEEE Transactions on Robotics},
  volume={39},
  number={3},
  pages={1706--1727},
  year={2023},
  publisher={IEEE}
}

@inproceedings{bishop2024relu,
  title={Relu-qp: A gpu-accelerated quadratic programming solver for model-predictive control},
  author={Bishop, Arun L and Zhang, John Z and Gurumurthy, Swaminathan and Tracy, Kevin and Manchester, Zachary},
  booktitle={2024 IEEE International Conference on Robotics and Automation (ICRA)},
  pages={13285--13292},
  year={2024},
  organization={IEEE}
}

@article{jeon2024cusadi,
  title={Cusadi: A gpu parallelization framework for symbolic expressions and optimal control},
  author={Jeon, Se Hwan and Hong, Seungwoo and Lee, Ho Jae and Khazoom, Charles and Kim, Sangbae},
  journal={IEEE Robotics and Automation Letters},
  volume={10},
  number={2},
  pages={899--906},
  year={2024},
  publisher={IEEE}
}

@inproceedings{alvarez2025real,
  title={Real-time whole-body control of legged robots with model-predictive path integral control},
  author={Alvarez-Padilla, Juan and Zhang, John Z and Kwok, Sofia and Dolan, John M and Manchester, Zachary},
  booktitle={2025 IEEE International Conference on Robotics and Automation (ICRA)},
  pages={14721--14727},
  year={2025},
  organization={IEEE}
}

@article{araujo2025retargeting,
  title={Retargeting matters: General motion retargeting for humanoid motion tracking},
  author={Araujo, Joao Pedro and Ze, Yanjie and Xu, Pei and Wu, Jiajun and Liu, C Karen},
  journal={arXiv preprint arXiv:2510.02252},
  year={2025}
}

@software{Zakka_Mink_Python_inverse_2026,
  author = {Zakka, Kevin},
  title = {{Mink: Python inverse kinematics based on MuJoCo}},
  year = {2026},
  month = feb,
  version = {1.1.0},
  url = {https://github.com/kevinzakka/mink},
  license = {Apache-2.0}
}

@article{yang2025omniretarget,
  title={Omniretarget: Interaction-preserving data generation for humanoid whole-body loco-manipulation and scene interaction},
  author={Yang, Lujie and Huang, Xiaoyu and Wu, Zhen and Kanazawa, Angjoo and Abbeel, Pieter and Sferrazza, Carmelo and Liu, C Karen and Duan, Rocky and Shi, Guanya},
  journal={arXiv preprint arXiv:2509.26633},
  year={2025}
}

@inproceedings{xue2025full,
  title={Full-order sampling-based mpc for torque-level locomotion control via diffusion-style annealing},
  author={Xue, Haoru and Pan, Chaoyi and Yi, Zeji and Qu, Guannan and Shi, Guanya},
  booktitle={2025 IEEE International Conference on Robotics and Automation (ICRA)},
  pages={4974--4981},
  year={2025},
  organization={IEEE}
}

@inproceedings{todorov2012mujoco,
  title={Mujoco: A physics engine for model-based control},
  author={Todorov, Emanuel and Erez, Tom and Tassa, Yuval},
  booktitle={2012 IEEE/RSJ international conference on intelligent robots and systems},
  pages={5026--5033},
  year={2012},
  organization={IEEE}
}

@article{ze2025twist,
  title={Twist: Teleoperated whole-body imitation system},
  author={Ze, Yanjie and Chen, Zixuan and Ara{\'u}jo, Joao Pedro and Cao, Zi-ang and Peng, Xue Bin and Wu, Jiajun and Liu, C Karen},
  journal={arXiv preprint arXiv:2505.02833},
  year={2025}
}

@article{dafarra2024icub3,
  title={icub3 avatar system: Enabling remote fully immersive embodiment of humanoid robots},
  author={Dafarra, Stefano and Pattacini, Ugo and Romualdi, Giulio and Rapetti, Lorenzo and Grieco, Riccardo and Darvish, Kourosh and Milani, Gianluca and Valli, Enrico and Sorrentino, Ines and Viceconte, Paolo Maria and others},
  journal={Science Robotics},
  volume={9},
  number={86},
  pages={eadh3834},
  year={2024},
  publisher={American Association for the Advancement of Science}
}

@inproceedings{bertrand2024high,
  title={High-speed and impact resilient teleoperation of humanoid robots},
  author={Bertrand, Sylvain and Penco, Luigi and Anderson, Dexton and Calvert, Duncan and Roy, Valentine and McCrory, Stephen and Mohammed, Khizar and Sanchez, Sebastian and Griffith, Will and Morfey, Steve and others},
  booktitle={2024 IEEE-RAS 23rd International Conference on Humanoid Robots (Humanoids)},
  pages={189--196},
  year={2024},
  organization={IEEE}
}

@article{he2024omnih2o,
  title={Omnih2o: Universal and dexterous human-to-humanoid whole-body teleoperation and learning},
  author={He, Tairan and Luo, Zhengyi and He, Xialin and Xiao, Wenli and Zhang, Chong and Zhang, Weinan and Kitani, Kris and Liu, Changliu and Shi, Guanya},
  journal={arXiv preprint arXiv:2406.08858},
  year={2024}
}

@inproceedings{li2025clone,
  title={Clone: Closed-loop whole-body humanoid teleoperation for long-horizon tasks},
  author={Li, Yixuan and Lin, Yutang and Cui, Jieming and Liu, Tengyu and Liang, Wei and Zhu, Yixin and Huang, Siyuan},
  booktitle={9th Annual Conference on Robot Learning},
  year={2025}
}

@inproceedings{he2024learning,
  title={Learning human-to-humanoid real-time whole-body teleoperation},
  author={He, Tairan and Luo, Zhengyi and Xiao, Wenli and Zhang, Chong and Kitani, Kris and Liu, Changliu and Shi, Guanya},
  booktitle={2024 IEEE/RSJ International Conference on Intelligent Robots and Systems (IROS)},
  pages={8944--8951},
  year={2024},
  organization={IEEE}
}

@inproceedings{khazoom2022humanoid,
  title={Humanoid arm motion planning for improved disturbance recovery using model hierarchy predictive control},
  author={Khazoom, Charles and Kim, Sangbae},
  booktitle={2022 International Conference on Robotics and Automation (ICRA)},
  pages={6607--6613},
  year={2022},
  organization={IEEE}
}

@book{featherstone2014rigid,
  title={Rigid body dynamics algorithms},
  author={Featherstone, Roy},
  year={2014},
  publisher={Springer}
}

\end{document}